\newtheorem{theorem}{Theorem}[section]
\newtheorem{lemma}[theorem]{Lemma}
\newcommand{\myb}[1]{\boldsymbol{#1}}
\newtheorem{prop}{Proposition}
\def\BibTeX{{\rm B\kern-.05em{\sc i\kern-.025em b}\kern-.08em
    T\kern-.1667em\lower.7ex\hbox{E}\kern-.125emX}}
\begin{document}

\title{ModShift: Model Privacy via Designed Shifts
\thanks{This work has been funded in part by one or more of the following grants: ONR N00014-22-1-2363, NSF CCF 2148313, ARO W911NF2410094, NSF CCF-2311653, ARO W911NF1910269, and NSF DBI-2412522.}

}

\author{\IEEEauthorblockN{Nomaan A. Kherani}
\IEEEauthorblockA{
\textit{University of Southern California}\\
kherani@usc.edu}
\and
\IEEEauthorblockN{Urbashi Mitra}
\IEEEauthorblockA{
\textit{University of Southern California}\\
ubli@usc.edu}}

\maketitle

\begin{abstract}
In this paper, shifts are introduced to preserve model privacy against an eavesdropper in federated learning. Model learning is treated as a parameter estimation problem. This perspective allows us to derive the Fisher Information matrix of the model updates from the shifted updates and drive them to singularity, thus posing a hard estimation problem for Eve. The shifts are securely shared with the central server to maintain model accuracy at the server and participating devices. A convergence test is proposed to detect if model updates have been tampered with and we show that our scheme passes this test. Numerical results show that our scheme achieves a higher model shift when compared to a noise injection scheme while requiring a lesser bandwidth secret channel. 
\end{abstract}

\begin{IEEEkeywords}
Model privacy, federated learning, Fisher Information Matrix, estimation, distributed optimization.
\end{IEEEkeywords}

\section{Introduction}
In federated learning (see {\em e.g.} \cite{kairouz2021advances}), agents provide local model information to a global server while maintaining privacy of the local data resident at each agent. However, inferences about an agent's data can still be made exploiting the shared local model updates, motivating the development of additional strategies to ensure data privacy \cite{privacyreview,secagg,fastsecagg,lightsecagg,dp}. Schemes like secure aggregation \cite{secagg,fastsecagg,lightsecagg}, and differential privacy \cite{dp,tasnim2023approximating} address privacy of user data but do not address the privacy of the overall global model. 

Recent work has begun to address the issue of {\bf model privacy}\cite{modelprivacy1,modelprivacy2,modelprivacy3}. In \cite{modelprivacy1}, the model is protected from the participating agents, but not eavesdroppers. In contrast, \cite{modelprivacy2} protects the model from eavesdroppers, but does not enable agent model learning.  While \cite{modelprivacy3} protects model privacy from eavesdroppers, a very constrained scenario is considered: only a link between a single agent the global server (amongst many) is compromised. 
Herein, we consider the problem of model privacy when {\bf all} links between the agents and the global server are eavesdropped.

Our approach is inspired by the creation of statistically hard estimation problems for the eavesdropper through signal shaping \cite{da2024guaranteed,li2024channel,li2024channelb, li2024optimized}. In \cite{da2024guaranteed}, wireless communications are made private through randomizing the {\em structure} of the modulation.  In \cite{li2024channel, li2024channelb, li2024optimized}, localization is made private by modifying the channel perceived by the eavesdropper.  The Fisher Information Matrix of the estimation problem undertaken by the eavesdropper is driven to singularity through transmitted signal precoding.  Such an approach is undertaken herein for the new problem of model privacy in federated learning or distributed optimization.  As in \cite{da2024guaranteed,li2024channel,li2024channelb, li2024optimized}, only a modest amount of information is shared with the global server by the agents in order to achieve model privacy.

 \begin{figure}[t]
 \begin{center}
\includegraphics[width=0.4\textwidth]{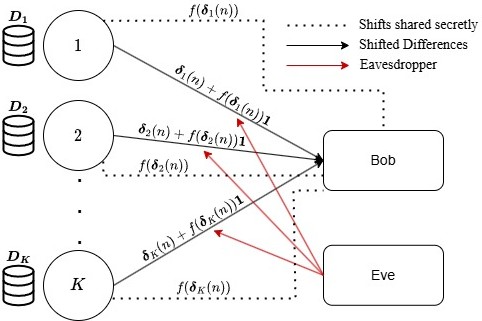}
\caption{System model for distributed optimization, model shift and eavesdropping.}
\label{fig:FL}
\end{center}
\end{figure}
We introduce intentional model shifts in the updates shared with the global server, effectively introducing model shifts into the perceived distributions of the data at each agent.  We use FedAvg \cite{mcmahan2017} as our exemplar learning scheme; however, our approach can be adapted to other federated or disributed optimization methods (see {\em e.g.} \cite{verma2023maximal}).
Model shift due to data heterogeneity has proven to be detrimental for the learning of global models \cite{shift1,shift2}.
The shifts are designed to pose a provably hard estimation problem for the eavesdropper and are shared with the central server over a secret channel. 

The main contributions of this paper are
\begin{enumerate}
    \item  The Fisher Information Matrix of the eavesdropper's (Eve's) model estimation problem is driven to singularity via model shift strategy wherein each agent secretly shares shift information with the central server (Bob) (see Figure ~\ref{fig:FL}).
    \item A family of shift designs is designed  that successfully passes a proposed test to detect if model updates have been tampered with.
    \item Several theoretical results are provided including the provable goodness of the method and a convergence analysis.
    \item The proposed method is compared to noise injection methods.  Noise injection fails the tamper test and requires a greater bandwidth secret channel and further results in a significantly lower estimation loss than the proposed scheme.
\end{enumerate}

\section{System Model}
 We consider a network consisting of $K$ agents that communicate with a central server (Bob) to collectively learn a global model $\boldsymbol{w}^{*} \in \mathbb{R}^{d}$. An unauthorized receiver (Eve) tries to learn $\myb{w}^{*}$ by eavesdropping on the communication between the agents and the server. Each agent has a local dataset $\myb{D}_{k}$ comprised of realizations of feature vectors, $\boldsymbol{x}_{k,i} \in \mathbb{R}^{d}$, and their corresponding labels $y_{k,i}$, where 
 $\myb{D}_{k} = \left [[\boldsymbol{x}_{k,1}^{T},y_{k,1}]^{T} , \cdots, [\boldsymbol{x}_{k,m_k}^{T},y_{k,m_{k}}]^{T} \right]$.  The global data set is thus,
$\myb{D} =  \left[\myb{D}_{1}, \cdots ,\myb{D}_{K}
\right] \in  \mathbb{R}^{(d+1) \times m},$ where $m = \sum_{k=1}^{K}m_{k}.$ 

 The function $l(\boldsymbol{w},\boldsymbol{x}_{k,i},y_{k,i})$ represents the loss of the $i$th data sample of agent $k$.
 Our aim is to find the $\boldsymbol{w}^{*}$ that minimizes the global loss function
 \begin{align}
    \label{loss}
     F(\boldsymbol{w}) = \frac{1}{m}\sum_{k=1}^{K}\sum_{i=1}^{m_{k}}l(\boldsymbol{w},\boldsymbol{x}_{k,i},y_{k,i}).
 \end{align}
 The gradient computed by agent $k$ on dataset $\myb{D}_{k}$ with a weight $\myb{w}$ is given by,
\begin{equation}
    \myb{g}_{k}(\myb{w},\myb{D}_{k}) = \sum_{i=1}^{m_{k}} \frac{\nabla l(\myb{w},\myb{x}_{k,i},y_{k,i})}{m_{k}}.    
\end{equation}
We use FedAvg \cite{mcmahan2017} with a slight modification to find the parameter $w^{*}$ as presented in Algorithm (\ref{algo}). In iteration $n$, after performing $R$ steps of local gradient descent with weight $\myb{w}(n)$, the devices share the \emph {difference} between their resulting local model $\myb{w}_{k,R}(n)$ and the global model $\myb{w}(n)$ instead of sharing the local model $\myb{w}_{k,R}(n)$. This formulation introduces an initial model shift due to random initialization and necessitates that Eve eavesdrop in every communication round to attempt to reconstruct the model trajectory. 
\begin{algorithm}[t]
\caption{The $K$ clients are indexed by $k$; $R$ is the number of local epochs and $\eta$ is the learning rate.}
\begin{algorithmic}[t]
\State \textbf{Server executes:}
\State Initialize $\myb{w}(0)$
\For{each round $n = 1, 2, \dots N$}
    \For{each client $k$ \textbf{in parallel}}
        \State $\myb{w}_{k,0}(n) \gets \myb{w}(n)$
        \For{each local epoch $r$ from $0$ to $R-1$}
            \State $\myb{w}_{k,r+1}(n) = \myb{w}_{k,r}(n) - \eta g_{k}(\myb{w}_{k,r}(n),\myb{D}_{k})$
        \EndFor
        \State $\myb{\delta}_{k}(n) \gets \myb{w}_{k,R}(n)-\myb{w}(n)$
    \EndFor
    \State $\myb{w}(n+1) \gets \myb{w}(n) + \sum_{k=1}^{K}\frac{m_{k}}{m}\myb{\delta}_{k}(n)$
\EndFor
\end{algorithmic}
\label{algo}
\end{algorithm}

We assume that each agent uses an orthogonal channel for transmission of $\myb{\delta}_{k}(n)$ to Bob, such as orthogonal frequency-division multiplexing (OFDM) \cite{ofdmbook}. Under the assumption of white Gaussian noise across the sub-channels and flat fading with known channel state information for both Bob and Eve, we have that the effective received signals per orthogonal channel in iteration $n$ for each receiver is
\begin{eqnarray}
    \label{Bob1}
    {\underline{\myb{y}}_{k}^{B}(n)} & = &\myb{\delta}_{k}(n) + \underline{\myb{z}}_{k}^{B}(n),\\
    \label{Eve1}
    {\underline{\myb{y}}_{k}^{E}(n)} & = &\myb{\delta}_{k}(n) + \underline{\myb{z}}_{k}^{E}(n),
\end{eqnarray}
where $\underline{\myb{z}}_{k}^{u}(n) \sim \mathcal{CN}(0,\frac{\sigma_{u}^{2}}{h_{k}^{u}(n)^{2}}\myb{I})$ is the distribution of the noise conditioned on the known channel state $h_{k}^{u}(n)$ where $u \in \{ B,E\}$. Both Bob and Eve wish to estimate the global model $w^{*}$ from their received signals.


In the sequel, we modify the transmitted signals, sharing this change with Bob over a secure channel \footnote{Note that communication privacy can be achieved via the methods in \cite{da2024guaranteed,li2024channel,li2024channelb, li2024optimized}.}, to mislead Eve to the incorrect model.
\section{Proposed Scheme: ModShift}
We propose to introduce shifts to $\myb{\delta}_{k}(n)$ to degrade the performance of the algorithm at Eve. With the addition of the shift, Eve estimates $\myb{\delta}_{k}(n)$ from 
\begin{align}
    \label{Eve}
    \underline{\myb{y}}_{k}^{E}(n) &  = \myb{\delta}_{k}(n) + f_{k,n}(\myb{\delta}_{k}(n))\mathbbm{1} + \underline{\myb{z}}_{k}^{E}(n),   
\end{align}
where $f_{k,n}(\cdot):R^{d} \rightarrow R$ is a function which we design and $f_{k,n}(\myb{\delta_{k}}(n))\mathbbm{1}$ is the shift introduced by the agents.
Thus, Eve's update is 
\begin{align}
    \label{update}
    \myb{w}(n+1)^{E} = \myb{w}(n)^{E} + \sum_{k=1}^{K}\frac{m_{k}}{m}\underline{\myb{y}}_{k}^{E}(n)
\end{align}
In the sequel, we provide a design for $f_{k,n}(\cdot)$ that yields a challenging estimation problem for Eve.  
After compensating for shifts, Bob's received signal remains as in Equation (\ref{Bob1}).
The goal is for Eve's estimate to converge to the incorrect model.  The difference between the true and estimated model is called the \emph{model shift}.  To  minimize the use of the secret channel, the shift is fixed for each feature dimension. 

\section{Model Shift Design}
We shall design our shifts in order to challenge Eve's model estimation ability.
To this end, we will develop designs that will drive Eve's Fisher Information matrix \cite{stevenestimation} to singularity. To show our results, we have the following assumptions:

\textbf{Assumption 1.} Given the data, $\myb{\delta}_{k}(n)$ is deterministic.

\textbf{Assumption 2.} The elements of $\myb{\delta}_{k}(n)$ are not functions of each other, \emph{i.e.} $\frac{\partial \myb{\delta}_{k,j}(n)}{\partial \myb{\delta}_{k,i}(n)} = 0 \hspace{5pt} \forall i \neq j$.

 Eve attempts to estimate  $\myb{\delta}_{k}(n)$ , $ \forall \;\; k$ and each round $n$ from Equation (\ref{Eve}).  To describe the estimation capability for this scenario, we have a set of Fisher Information Matrices (FIMs) for each $k$ and $n$.
 The problem  to solve is
\begin{eqnarray}
    \mathcal{P}: f_{k,n}(\cdot) &  = & \arg \min_f \left|\mbox{det} \left(\myb{J}(\myb{\delta}_{k}(n))\right) \right |
\end{eqnarray}
where $\myb{J}(\myb{\delta}_{k}(n))$ is the Fisher Information Matrix of the differences sent by agent $k$ in round $n$ and is given by
\begin{align}
    \myb{J}(\myb{\delta}_{k}(n)) = -\mathbb{E} \left[\nabla_{\myb{\delta}_{k}(n)}^{2}\log p_{\underline{\myb{y}}_k^E(n)}(\myb{y}; \myb{\delta}_k(n))\right],
\end{align}  
where $p_{\underline{\myb{y}}_k^E(n)}(\cdot;\myb{\delta}_k(n))$ denotes the probability density function of $\underline{\myb{y}}_k^{E}(n)$, parameterized by $\myb{\delta}_k(n)$.
Our next task is to characterize the FIM for Eve.  We shall see that we can determine constraints on the shifts to ensure that her FIM is singular.
\begin{lemma}
\label{FIMg}
 $\myb{J}(\myb{\delta}_{k}(n))$ for the signal model in Equation (\ref{Eve}) is given by
\begin{eqnarray}
    & \! \! \! \! \! \! \myb{J}(\myb{\delta}_{k}(n)) =  \frac{2h_{k}^{E}(n)^{2}}{\sigma_{E}^{2}}\left( \mathbf{I} + \mathbbm{1} \nabla f_{k,n}(\myb{\delta}_{k}(n))^T + \right. \nonumber \\ & \! \! \! \! \! \! \left. \nabla f_{k,n}(\myb{\delta}_{k}(n))  \mathbbm{1}^T + d \nabla f_{k,n}(\myb{\delta}_{k}(n)) \nabla f_{k,n}(\myb{\delta}_{k}(n))^T \right).
\end{eqnarray}
where $\myb{\delta}_{k,i}(n)$ is the $i$th element of the vector $\myb{\delta}_{k}(n)$ and $\myb{e}_{i}$ is a standard basis vector in $\mathbb{R}^{d}$, where the $i$th component is one.
\end{lemma}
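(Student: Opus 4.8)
The plan is to read Eve's observation in Equation (\ref{Eve}) as a circularly-symmetric complex Gaussian vector $\underline{\myb{y}}_{k}^{E}(n)\sim\mathcal{CN}(\boldsymbol{\mu},\boldsymbol{\Sigma})$ whose mean $\boldsymbol{\mu}(\myb{\delta}_{k}(n))=\myb{\delta}_{k}(n)+f_{k,n}(\myb{\delta}_{k}(n))\mathbbm{1}$ carries all the dependence on the parameter $\myb{\delta}_{k}(n)$, while the covariance $\boldsymbol{\Sigma}=\frac{\sigma_{E}^{2}}{h_{k}^{E}(n)^{2}}\myb{I}$ is parameter-free so that $\boldsymbol{\Sigma}^{-1}=\frac{h_{k}^{E}(n)^{2}}{\sigma_{E}^{2}}\myb{I}$. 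For any such model the FIM collapses to the Gaussian ``slope'' form $\myb{J}=\frac{2h_{k}^{E}(n)^{2}}{\sigma_{E}^{2}}\,\myb{M}^{T}\myb{M}$, where $\myb{M}$ is the Jacobian of $\boldsymbol{\mu}$ with respect to $\myb{\delta}_{k}(n)$. I would either cite this standard fact or, to stay faithful to the stated definition $\myb{J}=-\mathbb{E}[\nabla^{2}\log p]$, derive it in a couple of lines: writing $\log p=\mathrm{const}-\frac{h_{k}^{E}(n)^{2}}{\sigma_{E}^{2}}\|\underline{\myb{y}}_{k}^{E}(n)-\boldsymbol{\mu}\|^{2}$, differentiating twice, and taking the expectation so that the residual $\mathbb{E}[\underline{\myb{y}}_{k}^{E}(n)-\boldsymbol{\mu}]=\myb{0}$ kills the term containing the second derivative of $\boldsymbol{\mu}$, leaving only $\myb{M}^{T}\myb{M}$.

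Next I would compute $\myb{M}$ componentwise. Writing $\boldsymbol{\mu}_{a}=\myb{\delta}_{k,a}(n)+f_{k,n}(\myb{\delta}_{k}(n))$, the linear term $\myb{\delta}_{k}(n)$ contributes the identity, since Assumption 2 forces $\partial\myb{\delta}_{k,a}(n)/\partial\myb{\delta}_{k,i}(n)=\mathbbm{1}[a=i]$, and the chain rule applied to $f_{k,n}(\myb{\delta}_{k}(n))\mathbbm{1}$ contributes the rank-one term $\mathbbm{1}\,\nabla f_{k,n}(\myb{\delta}_{k}(n))^{T}$, with $\nabla f_{k,n}(\myb{\delta}_{k}(n))$ the gradient column vector. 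Hence $\myb{M}=\myb{I}+\mathbbm{1}\,\nabla f_{k,n}(\myb{\delta}_{k}(n))^{T}$. Substituting into the slope form and expanding $\myb{M}^{T}\myb{M}=(\myb{I}+\nabla f_{k,n}(\myb{\delta}_{k}(n))\mathbbm{1}^{T})(\myb{I}+\mathbbm{1}\,\nabla f_{k,n}(\myb{\delta}_{k}(n))^{T})$ yields the four terms $\myb{I}$, $\mathbbm{1}\,\nabla f_{k,n}(\myb{\delta}_{k}(n))^{T}$, $\nabla f_{k,n}(\myb{\delta}_{k}(n))\mathbbm{1}^{T}$, and $\nabla f_{k,n}(\myb{\delta}_{k}(n))(\mathbbm{1}^{T}\mathbbm{1})\nabla f_{k,n}(\myb{\delta}_{k}(n))^{T}$; using the scalar identity $\mathbbm{1}^{T}\mathbbm{1}=d$ in the last term reproduces the claimed expression exactly.

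The delicate point is the complex-Gaussian bookkeeping rather than any hard estimate. Because $\boldsymbol{\mu}$ is real-valued while the noise $\underline{\myb{z}}_{k}^{E}(n)$ is complex, the imaginary part of $\underline{\myb{y}}_{k}^{E}(n)$ is informationless about $\myb{\delta}_{k}(n)$, and the factor of $2$ in front arises from the real part carrying variance $\frac{\sigma_{E}^{2}}{2h_{k}^{E}(n)^{2}}$ (equivalently, the standard $2\,\Re\{\cdot\}$ in the complex FIM). I would make this explicit so the $2h_{k}^{E}(n)^{2}/\sigma_{E}^{2}$ prefactor is not merely asserted. Finally, Assumption 1 is what legitimizes the whole computation: treating $\myb{\delta}_{k}(n)$ as a deterministic parameter given the data is precisely what makes this frequentist FIM well defined and lets $\boldsymbol{\mu}$ be regarded as a fixed function of the parameter to be differentiated, while Assumption 2 is what guarantees the clean identity block in $\myb{M}$.
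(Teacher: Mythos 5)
Your proof is correct and takes essentially the same approach as the paper, which disposes of this lemma in one line by citing the standard FIM for a parametric mean in Gaussian noise: your Jacobian computation $\myb{M} = \myb{I} + \mathbbm{1}\nabla f_{k,n}(\myb{\delta}_{k}(n))^{T}$, the expansion of $\myb{M}^{T}\myb{M}$ with $\mathbbm{1}^{T}\mathbbm{1} = d$, and the factor of $2$ from the circularly-symmetric complex noise are exactly the computation that citation compresses. Your explicit treatment of the complex-Gaussian prefactor is a welcome addition the paper leaves implicit.
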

This result follows straightforwardly from
the FIM for estimation of a parametric mean in Gaussian noise \cite{stevenestimation}. To provide a constraint on $f_{k,n}(\cdot)$ such that the above matrix is singular, we exploit the Matrix Determinant Lemma \cite{matdet}: 
 \begin{lemma}
     (Matrix Determinant Lemma). Suppose $\myb{A}$ is an invertible square matrix of dimension $n$ and $\myb{U},\myb{V}$ are matrices with dimension $n \times m$. Then
     \begin{equation}
         \mbox{det}(\myb{A} + \myb{U}\myb{V}^{T}) = \mbox{det}(\myb{A})\mbox{det}(\myb{I}_{m} + \myb{V}^{T}\myb{A}^{-1}\myb{U}).
     \end{equation}
 \end{lemma}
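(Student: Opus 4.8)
The plan is to prove the identity by embedding both sides into a single $(n+m)\times(n+m)$ block matrix and evaluating its determinant in two different ways. Concretely, I would introduce the auxiliary matrix
\[
\myb{M} = \begin{pmatrix} \myb{A} & -\myb{U} \\ \myb{V}^{T} & \myb{I}_{m} \end{pmatrix},
\]
whose determinant can be factored by eliminating either the lower-left or the upper-right off-diagonal block (equivalently, by forming the Schur complement with respect to $\myb{A}$ or with respect to $\myb{I}_m$).

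First, treating $\myb{A}$ as the pivot block (invertible by hypothesis), I would left-multiply $\myb{M}$ by the unit block-lower-triangular matrix $\begin{pmatrix} \myb{I}_n & 0 \\ -\myb{V}^T\myb{A}^{-1} & \myb{I}_m\end{pmatrix}$, which has determinant $1$ and zeroes out the bottom-left block, yielding the block-upper-triangular matrix $\begin{pmatrix} \myb{A} & -\myb{U} \\ 0 & \myb{I}_m + \myb{V}^T\myb{A}^{-1}\myb{U}\end{pmatrix}$. Since the determinant of a block-triangular matrix is the product of the determinants of its diagonal blocks, this gives $\det(\myb{M}) = \det(\myb{A})\,\det(\myb{I}_m + \myb{V}^T\myb{A}^{-1}\myb{U})$, the right-hand side of the claim.

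Second, I would instead take the trivially invertible block $\myb{I}_m$ as pivot and form the Schur complement with respect to it, collapsing $\myb{M}$ to block-triangular form with diagonal blocks $\myb{A} - (-\myb{U})\myb{I}_m^{-1}\myb{V}^T = \myb{A} + \myb{U}\myb{V}^T$ and $\myb{I}_m$. This gives $\det(\myb{M}) = \det(\myb{I}_m)\,\det(\myb{A} + \myb{U}\myb{V}^T) = \det(\myb{A} + \myb{U}\myb{V}^T)$, the left-hand side. Equating the two evaluations of $\det(\myb{M})$ proves the lemma.

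There is no substantive obstacle here — the result is classical and both factorizations are routine — so the only care needed is bookkeeping: choosing the sign of the off-diagonal block as $-\myb{U}$ so that the $\myb{I}_m$-pivot route reproduces $\myb{A} + \myb{U}\myb{V}^T$ with the correct $+$ sign, and noting that the elimination steps require only invertibility of $\myb{A}$ (never of $\myb{I}_m + \myb{V}^T\myb{A}^{-1}\myb{U}$), so the identity holds with no side conditions beyond those stated. As a sanity check I would verify the rank-one case $m=1$, where $\myb{U},\myb{V}$ become vectors and the identity reduces to the familiar $\det(\myb{A} + \myb{u}\myb{v}^T) = \det(\myb{A})(1 + \myb{v}^T\myb{A}^{-1}\myb{u})$, which is exactly the form subsequently applied together with Lemma~\ref{FIMg} to force singularity of Eve's FIM.
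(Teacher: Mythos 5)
Your proof is correct, but note that the paper does not prove this lemma at all: it is quoted as a classical result with a citation to an external reference, so there is no in-paper argument to compare against. Your block-matrix derivation is the canonical proof of the identity, and both evaluations of $\det(\myb{M})$ check out: the unit block-lower-triangular elimination with pivot $\myb{A}$ yields diagonal blocks $\myb{A}$ and $\myb{I}_m + \myb{V}^T\myb{A}^{-1}\myb{U}$, the elimination with pivot $\myb{I}_m$ yields $\myb{A} - (-\myb{U})\myb{I}_m^{-1}\myb{V}^T = \myb{A} + \myb{U}\myb{V}^T$, and your sign choice $-\myb{U}$ in the off-diagonal block is exactly what makes the second route produce the $+$ sign; you are also right that only invertibility of $\myb{A}$ is ever used, matching the lemma's hypotheses. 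What your approach buys over the paper's citation is self-containedness, at the cost of a short digression the paper evidently judged unnecessary. One minor correction to your closing remark: the rank-one case $m=1$ is \emph{not} the form applied downstream. In the proof of Proposition~\ref{singular1} the paper applies the lemma with $m=3$, writing $\myb{J}_1(\myb{\delta}_k(n)) - \lambda\myb{I} = \myb{A} + \myb{U}\myb{V}^T$ with $\myb{U},\myb{V}$ of size $d\times 3$ (collecting the three terms $\mathbbm{1}\nabla f^T$, $\nabla f\,\mathbbm{1}^T$, and $d\,\nabla f\,\nabla f^T$ from Lemma~\ref{FIMg}), so the $3\times 3$ determinant $\det(\myb{I}_3 + \myb{V}^T\myb{A}^{-1}\myb{U})$ is what gets set to zero. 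This mislabeling is harmless to your proof, which establishes the identity for all $m$, but the sanity check you propose does not exercise the case actually used.
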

\begin{prop}
\label{singular1}
    $\myb{J}(\myb{\delta}_{k}(n))$ is singular if
    \begin{equation}
        \label{singular}
       \nabla f_{k,n}(\myb{\delta}_{k}(n))^{T}\mathbbm{1} = -1.
    \end{equation}
  Furthermore, under this condition, the eigenvalues of $\myb{J}(\myb{\delta}_{k}(n))$ are given by
    \begin{align}
    \label{eig1}
        \lambda_{1}& = 0,   \; \;   \lambda_{j}  = \frac{2h_{k}^{E}(n)^{2}}{\sigma_{E}^{2}} \; \;  j = \{2, \cdots, d-1\}, \; \; \\
        &\lambda_{d} = \frac{2h_{k}^{E}(n)^{2}d}{\sigma_{E}^{2}}||\nabla f_{k,n}(\myb{\delta}_{k}(n))||^{2}. \nonumber
    \end{align}
\end{prop}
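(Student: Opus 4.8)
The plan is to factor out the positive scalar $c = 2 h_k^E(n)^2/\sigma_E^2$ and write $\myb{J}(\myb{\delta}_k(n)) = c\,\myb{M}$, where $\myb{M} = \myb{I} + \mathbbm{1}\myb{g}^T + \myb{g}\mathbbm{1}^T + d\,\myb{g}\myb{g}^T$ and $\myb{g} = \nabla f_{k,n}(\myb{\delta}_k(n))$. The key observation is that the perturbation of the identity has rank at most two, being built entirely from the two vectors $\mathbbm{1}$ and $\myb{g}$. This suggests two complementary tools: the Matrix Determinant Lemma to collapse the singularity question onto a $2\times 2$ determinant, and an invariant-subspace argument to read off all $d$ eigenvalues.

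For singularity, I would write the rank-two perturbation as $\myb{U}\myb{V}^T$ with $\myb{U} = [\,\mathbbm{1}\;\;\myb{g}\,]$ and $\myb{V} = [\,\myb{g}\;\;\mathbbm{1}+d\myb{g}\,]$, so that $\myb{U}\myb{V}^T = \mathbbm{1}\myb{g}^T + \myb{g}\mathbbm{1}^T + d\,\myb{g}\myb{g}^T$ as required. Applying the Matrix Determinant Lemma with $\myb{A}=\myb{I}$ gives $\det(\myb{M}) = \det(\myb{I}_2 + \myb{V}^T\myb{U})$. Writing $s = \mathbbm{1}^T\myb{g}$ and $q = \|\myb{g}\|^2$ and using $\mathbbm{1}^T\mathbbm{1} = d$, the entries of the $2\times 2$ matrix $\myb{I}_2 + \myb{V}^T\myb{U}$ collapse, after expansion, to a determinant of $(1+s)^2$. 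Hence $\det \myb{J} = c^d (1+s)^2$, which vanishes exactly when $s = \mathbbm{1}^T\myb{g} = -1$, i.e. the stated constraint $\nabla f_{k,n}(\myb{\delta}_k(n))^T\mathbbm{1} = -1$.

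For the eigenvalues, I would split $\mathbb{R}^d$ into the two-dimensional span of $\{\mathbbm{1},\myb{g}\}$ and its orthogonal complement. Any $\myb{v}$ orthogonal to both $\mathbbm{1}$ and $\myb{g}$ satisfies $\myb{M}\myb{v} = \myb{v}$, producing $d-2$ eigenvalues equal to $1$ for $\myb{M}$ and hence $\lambda_j = c$ for $\myb{J}$. The span of $\{\mathbbm{1},\myb{g}\}$ is invariant under $\myb{M}$, and in this basis the restriction is represented by the very same $2\times 2$ matrix $\myb{I}_2+\myb{V}^T\myb{U}$, whose trace is $2(1+s)+dq$ and whose determinant is $(1+s)^2$. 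Imposing $s=-1$ makes the trace $dq$ and the determinant $0$, so the two remaining eigenvalues of $\myb{M}$ are the roots of $\mu(\mu - dq)=0$, namely $0$ and $d\|\myb{g}\|^2$. Scaling by $c$ recovers $\lambda_1 = 0$ and $\lambda_d = (2h_k^E(n)^2 d/\sigma_E^2)\|\nabla f_{k,n}(\myb{\delta}_k(n))\|^2$.

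I expect the main obstacle to be not the algebra but the bookkeeping of the degenerate case where $\mathbbm{1}$ and $\myb{g}$ are linearly dependent. If $\myb{g}=\alpha\mathbbm{1}$, the constraint $s=-1$ forces $\alpha=-1/d$ and hence $d\|\myb{g}\|^2 = 1$, so the distinguished eigenvalue $\lambda_d$ merges with the bulk value $c$ and the two-dimensional invariant subspace degenerates. I would note that this is a measure-zero boundary case that leaves the eigenvalue multiset unchanged (one zero, the rest equal to $c$), so the statement and the eigenvalue counts remain consistent; generically $\mathbbm{1}$ and $\myb{g}$ are independent and the three distinct eigenvalue groups appear as written.
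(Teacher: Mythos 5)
Your proof is correct, and it takes a genuinely different route from the paper's. The paper's proof applies the Matrix Determinant Lemma not to $\myb{J}$ itself but to the shifted matrix $\myb{J}_{1}(\myb{\delta}_{k}(n)) - \lambda\myb{I} = \myb{A} + \myb{U}\myb{V}^{T}$ with $\myb{A} = (1-\lambda)\myb{I}$ and a rank-three factorization $\myb{U} = [\,\nabla f \;\; \mathbbm{1} \;\; \sqrt{d}\,\nabla f\,]$, $\myb{V} = [\,\mathbbm{1} \;\; \nabla f \;\; \sqrt{d}\,\nabla f\,]$; the resulting $3\times 3$ determinant gives the characteristic polynomial, all eigenvalues fall out at once, and the constraint in Equation (\ref{singular}) is then obtained by imposing $\lambda_{1}=0$. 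You instead use the minimal rank-two factorization $\myb{U}=[\,\mathbbm{1}\;\;\myb{g}\,]$, $\myb{V}=[\,\myb{g}\;\;\mathbbm{1}+d\myb{g}\,]$ to get the closed form $\det\myb{J} = c^{d}(1+s)^{2}$ with $s=\mathbbm{1}^{T}\myb{g}$ (your $2\times 2$ algebra checks out: $\myb{I}_{2}+\myb{V}^{T}\myb{U}$ has entries $1+s$, $q$, $d(1+s)$, $1+s+dq$, hence determinant $(1+s)^{2}$ and trace $2(1+s)+dq$), and you recover the spectrum via the invariant-subspace identity $\myb{M}\myb{U}=\myb{U}(\myb{I}_{2}+\myb{V}^{T}\myb{U})$ together with $\myb{M}\myb{v}=\myb{v}$ on the orthogonal complement of $\mathrm{span}\{\mathbbm{1},\myb{g}\}$; a trace check confirms consistency, since $(d-2)+2(1+s)+dq = d+2s+dq = \mathrm{tr}\,\myb{M}$. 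Your version buys two things the paper's does not make explicit: the closed-form determinant shows the condition $\mathbbm{1}^{T}\nabla f_{k,n} = -1$ is \emph{necessary} as well as sufficient for singularity of the FIM (so the paper's remark that the condition is ``sufficient, but not necessary'' can only be about the family of solutions $f$ in the subsequent proposition, not about singularity itself), and you explicitly handle the degenerate case $\myb{g}\propto\mathbbm{1}$, where $s=-1$ forces $\myb{g}=-\mathbbm{1}/d$, $d\|\myb{g}\|^{2}=1$, and $\lambda_{d}$ merges with the bulk value $c$ while the eigenvalue multiset (one zero, $d-1$ copies of $c$) remains as stated --- a case the paper's argument passes through silently. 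The paper's route is more compact in delivering the condition and eigenvalues from a single characteristic-polynomial computation; yours is more transparent about rank, necessity, and degeneracies, at the cost of two separate arguments.
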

\begin{proof}
Let $\myb{J}_{1}(\myb{\delta}_{k}(n)) = \frac{\sigma_{E}^{2}}{2h_{k}^{E}(n)^{2}}\myb{J}(\myb{\delta}_{k}(n))$ and  $\lambda_{1},\cdots,\lambda_{d}$ be the eigenvalues of $\myb{J}(\myb{\delta}_{k}(n))$. We compute $\det(\myb{J}_{1}(\myb{\delta}_{k}(n)) - \lambda \myb{I})$ to find its eigenvalues. We observe from Proposition ~\ref{FIMg}, that $\myb{J}_{1}(\myb{\delta}_{k}(n)) - \lambda \myb{I}$ can be written as the sum of a $d \times d$ matrix diagonal matrix $\myb{A}$ and the product of two $d \times 3$ matrices $\myb{U}$ and $\myb{V}$ such that,
\begin{align}
    \label{eigenvalue}
    \myb{J}_{1}(\myb{\delta}_{k}(n)) - \lambda \myb{I} = \myb{A} + \myb{U}\myb{V}^{T},
\end{align}
where 
\begin{align}
    \myb{A} & =  \mbox{diag} \left[ ( 1- \lambda) \mathbbm{1} \right] \\ 
    \myb{U} & = \begin{bmatrix}
        \nabla f_{k,n}(\myb{\delta}_{k}(n))&  \mathbbm{1} & \sqrt{d}\nabla f_{k,n}(\myb{\delta}_{k}(n))
    \end{bmatrix},\\
    \myb{V} & = \begin{bmatrix}
    \mathbbm{1} & \nabla f_{k,n}(\myb{\delta}_{k}(n)) & \sqrt{d}\nabla f_{k,n}(\myb{\delta}_{k}(n))
    \end{bmatrix}.
\end{align}
We use the matrix determinant lemma to compute the determinant of Equation (\ref{eigenvalue}) and set it to $0$ to obtain the eigenvalues of $\myb{J}_{1}(\myb{\delta}_{k}(n))$ from which we obtain the eigenvalues of $\myb{J}(\myb{\delta}_{k}(n))$. We then set $\lambda_{1} = 0$, giving us the condition in Equation (\ref{singular}). 
\end{proof}
We observe that these conditions are sufficient, but not necessary, that is, other solutions may exist.
Our next task is to determine the solution to the differential equations provided in Equation (\ref{singular}).
\begin{prop}
    A family of solutions to Equation (\ref{singular}) is given by
    \begin{dmath}
        \label{sol}
        f_{k,n}(\myb{\delta}_{k}(n)) = \myb{\gamma}_{k}(n)^{T}\myb{\delta}_{k}(n) + g\left(\myb{D}\myb{\delta}_{k}(n)\right), 
    \end{dmath}
    where $g(\cdot) : \mathbb{R}^{d-1} \rightarrow \mathbb{R}$ is an arbitrary differentiable function, 
    $\myb{\gamma}_{k}(n) \in \mathbb{R}^{d}$ such that
    $\myb{\gamma}_{k}(n)^{T}\mathbbm{1} = -1$ and
    \begin{align}
   \! \! \! \! \!      \myb{D}\myb{\delta}_{k}(n) = \begin{bmatrix}
           \myb{\delta}_{k,1}(n) - \myb{\delta}_{k,2}(n) & \! \! \! \! \!  \cdots  \! \! \! \! \! 
           & \myb{\delta}_{k,1}(n) - \myb{\delta}_{k,d}
         \end{bmatrix}^{T}
    \end{align}
\end{prop}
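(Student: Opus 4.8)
The plan is to recognize Equation (\ref{singular}) as a single first-order linear PDE in the unknown scalar field $f_{k,n}$, namely the statement that the directional derivative of $f_{k,n}$ along the all-ones direction is constant: $\sum_{i=1}^{d}\partial f_{k,n}/\partial \myb{\delta}_{k,i}(n) = -1$. For a linear PDE of this form the general solution is a particular solution plus the general solution of the associated homogeneous equation $\nabla f_{k,n}^{T}\mathbbm{1}=0$. I would therefore exhibit the two pieces separately and verify by direct substitution that their sum satisfies the constraint, which is all that is needed to establish that (\ref{sol}) is \emph{a family of} solutions.

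The core step I would carry out is a short computation. Writing $\myb{\delta}=\myb{\delta}_{k}(n)$ and letting $\myb{D}\in\mathbb{R}^{(d-1)\times d}$ be the difference operator whose $j$th row is $\myb{e}_{1}^{T}-\myb{e}_{j+1}^{T}$, the chain rule gives
\begin{align}
\nabla f_{k,n}(\myb{\delta}) = \myb{\gamma}_{k}(n) + \myb{D}^{T}\nabla g(\myb{D}\myb{\delta}).
\end{align}
The key observation is that every row of $\myb{D}$ sums to zero, so $\myb{D}\mathbbm{1}=\myb{0}$. Taking the inner product with $\mathbbm{1}$ then annihilates the $g$ contribution entirely,
\begin{align}
\nabla f_{k,n}(\myb{\delta})^{T}\mathbbm{1} = \myb{\gamma}_{k}(n)^{T}\mathbbm{1} + \nabla g(\myb{D}\myb{\delta})^{T}\myb{D}\mathbbm{1} = \myb{\gamma}_{k}(n)^{T}\mathbbm{1},
\end{align}
and invoking the hypothesis $\myb{\gamma}_{k}(n)^{T}\mathbbm{1}=-1$ closes the verification for an arbitrary differentiable $g$.

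To interpret the two pieces: the linear term $\myb{\gamma}_{k}(n)^{T}\myb{\delta}$ is the particular solution, its only role being to supply the inhomogeneous constant $-1$ via the constraint $\myb{\gamma}_{k}(n)^{T}\mathbbm{1}=-1$; the term $g(\myb{D}\myb{\delta})$ is the homogeneous solution, and it works precisely because the coordinate differences $\myb{\delta}_{k,1}(n)-\myb{\delta}_{k,j}(n)$ are invariant under translation of $\myb{\delta}$ along $\mathbbm{1}$, the characteristic direction of the PDE. If one further wishes to argue that (\ref{sol}) is in fact the \emph{general} solution rather than merely a family, the cleanest route is the method of characteristics: the characteristics are the lines $\myb{\delta}+t\mathbbm{1}$, along which $f_{k,n}$ decreases at unit rate, and the $d-1$ independent first integrals of this flow can be taken to be exactly the entries of $\myb{D}\myb{\delta}$, so any solution must be a function of $\myb{D}\myb{\delta}$ plus the particular linear piece.

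The main obstacle is not the verification, which is immediate, but the completeness claim: one must check that $\myb{D}\myb{\delta}$ supplies a \emph{complete} and independent set of translation invariants, so that no admissible homogeneous solution is missed. This amounts to showing $\mathrm{rank}(\myb{D})=d-1$ and $\ker\myb{D}=\mathrm{span}(\mathbbm{1})$, after which the splitting $\mathbb{R}^{d}=\mathrm{span}(\mathbbm{1})\oplus\mathrm{row}(\myb{D})$ makes the reduction to an unconstrained function $g$ on the $(d-1)$-dimensional transverse space rigorous. Since the statement only asserts a family of solutions, however, the direct substitution above suffices.
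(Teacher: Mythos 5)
Your proof is correct and takes essentially the same approach as the paper: both decompose $f_{k,n}$ into the linear particular term $\myb{\gamma}_{k}(n)^{T}\myb{\delta}_{k}(n)$ satisfying the inhomogeneous condition $\nabla(\myb{\gamma}_{k}(n)^{T}\myb{\delta}_{k}(n))^{T}\mathbbm{1}=-1$ and a free term $g(\myb{D}\myb{\delta}_{k}(n))$ satisfying the homogeneous condition, with the method of characteristics as the underlying derivation. Your explicit chain-rule computation using $\myb{D}\mathbbm{1}=\myb{0}$ merely fills in the verification the paper leaves implicit.
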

The method of characteristics \cite{characteristics} is used to obtain this result. The shift design consists of a linear term $\myb{\gamma}_{k}(n)^{T}\myb{\delta}_{k}(n)$ which satisfies $\nabla_{\myb{\delta}_{k}(n)} (\myb{\gamma}_{k}(n)^{T}\myb{\delta}_{k}(n))\mathbbm{1}^{T} = -1$ and a free term which satisfies $\nabla_{\myb{\delta}_{k}(n)} g(\myb{D} \myb{\delta}_{k}(n))^{T} \mathbbm{1}= 0
$. A valid function is the zero function $g(\cdot) = 0$ for all arguments (see Equation (\ref{sol})). {Investigating alternative $g(\cdot)$  functions is a task for future work.} In the case of $g(\cdot) = 0$, we can write the signal transmitted by agent $k$,
\begin{align}
 \! \! \! \! \!    \myb{\delta}_{k}(n) + f_{k,n}(\myb{\delta}_{k}(n))\mathbbm{1} 
     = \left[I+\left(\mathbbm{1}\myb{\gamma}_{k}(n)^{T}\right)\right]\myb{\delta}_{k}(n). 
\end{align}
 We can show that $\left[I+\left(\mathbbm{1}\myb{\gamma}_{k}(n)^{T}\right)\right]$ has rank $d-1$.  Thus, a linear transformation to produce the desired shifts from the differences can drive Eve's FIM to singularity. 
\section{Convergence}
\label{convergence}
Under ideal conditions where $\sigma_{E},\sigma_{B} \rightarrow 0$, Eve expects $||\myb{w}(n+1)^{E}-\myb{w}(n)^{E}|| \rightarrow 0$ when Bob converges. Eve can use this as a test to investigate if gradients have been tampered with. We show that our scheme passes this test despite the addition of shifts. A similar result can be shown for non-zero $\sigma_{E},\sigma_{B}$.
\begin{prop}
    \label{convprop}
    Given $||\myb{w}(n+1) - \myb{w}(n)|| \leq \epsilon,$  Eve's weights are bounded as
    \begin{eqnarray}
      & & \! \! \! \! \!  \! \! \! \! \!   \! \! \! \! \!   ||\myb{w}(n+1)^{E} - \myb{w}(n)^{E}|| \leq \nonumber \\
        &&
        \begin{cases}
             \epsilon(1+\sqrt{d}||\gamma(n)||), \text{ if } \myb{\gamma}_{k}(n) = \myb{\gamma}(n) \forall k,\\
            \epsilon(1+\sqrt{d} \max_{k}||\gamma_{k}(n)||\alpha(n)), \text{else}
        \end{cases}
    \end{eqnarray}
    where $\alpha(n) = \frac{\sum_{k=1}^{K}\frac{m_{k}}{m}||\myb{\delta}_{k}(n)||}{\left|\left|\sum_{k=1}^{K}\frac{m_{k}}{m}\myb{\delta}_{k}(n)\right|\right|}$.
\end{prop}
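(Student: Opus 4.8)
The plan is to work in the ideal, noiseless regime under the chosen design $g(\cdot)=0$, so that Eve's received update in round $n$ is exactly the linearly transformed difference $\left[\myb{I}+\mathbbm{1}\myb{\gamma}_k(n)^{T}\right]\myb{\delta}_k(n)$ established at the end of the previous section. Subtracting Bob's recursion from Eve's, the per-round increment gap becomes $\myb{w}(n+1)^{E}-\myb{w}(n)^{E}=\myb{s}(n)+\mathbbm{1}\,c(n)$, where $\myb{s}(n)=\sum_{k}\frac{m_k}{m}\myb{\delta}_k(n)$ is precisely Bob's increment, so that $\|\myb{s}(n)\|\leq\epsilon$ by hypothesis, and $c(n)=\sum_{k}\frac{m_k}{m}\myb{\gamma}_k(n)^{T}\myb{\delta}_k(n)$ is a scalar collecting all shift contributions. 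Applying the triangle inequality together with $\|\mathbbm{1}\|=\sqrt{d}$ then gives $\|\myb{w}(n+1)^{E}-\myb{w}(n)^{E}\|\leq\|\myb{s}(n)\|+\sqrt{d}\,|c(n)|\leq\epsilon+\sqrt{d}\,|c(n)|$, so the whole problem reduces to bounding the scalar $|c(n)|$, and the two branches of the claim will arise from two different ways of doing so.

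For the homogeneous case $\myb{\gamma}_k(n)=\myb{\gamma}(n)$ for all $k$, I would factor the common vector out of the sum \emph{before} bounding, writing $c(n)=\myb{\gamma}(n)^{T}\myb{s}(n)$. Cauchy--Schwarz then yields $|c(n)|\leq\|\myb{\gamma}(n)\|\,\|\myb{s}(n)\|\leq\epsilon\,\|\myb{\gamma}(n)\|$, producing the first branch $\epsilon\bigl(1+\sqrt{d}\,\|\myb{\gamma}(n)\|\bigr)$. The essential point is that factoring first lets the tight bound $\|\myb{s}(n)\|\leq\epsilon$ apply directly to Bob's aggregated increment rather than to the individual $\myb{\delta}_k(n)$.

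For the general case the heterogeneous vectors $\myb{\gamma}_k(n)$ cannot be pulled out, so I would instead push the absolute value and a per-term Cauchy--Schwarz inside the sum: $|c(n)|\leq\sum_{k}\frac{m_k}{m}\|\myb{\gamma}_k(n)\|\,\|\myb{\delta}_k(n)\|\leq\max_{k}\|\myb{\gamma}_k(n)\|\sum_{k}\frac{m_k}{m}\|\myb{\delta}_k(n)\|$. The remaining step is to re-express the surviving weighted sum of norms in terms of $\epsilon$: by the definition of $\alpha(n)$ this sum equals $\alpha(n)\,\|\myb{s}(n)\|$, and invoking $\|\myb{s}(n)\|\leq\epsilon$ once more bounds it by $\epsilon\,\alpha(n)$, which delivers the second branch $\epsilon\bigl(1+\sqrt{d}\,\max_{k}\|\myb{\gamma}_k(n)\|\,\alpha(n)\bigr)$.

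The proof is essentially bookkeeping with the triangle inequality and Cauchy--Schwarz, and I do not anticipate a genuinely hard step. The one subtlety I would flag is understanding \emph{why} the two cases differ: the ratio-of-norms factor $\alpha(n)\geq 1$ is the unavoidable price of heterogeneous shift vectors, since without a common $\myb{\gamma}$ one can no longer collapse the weighted sum of the $\myb{\delta}_k(n)$ into Bob's aggregated increment $\myb{s}(n)$ before applying Cauchy--Schwarz. The main thing to get right is confirming that $\myb{s}(n)$ is exactly Bob's increment, so that the hypothesis $\|\myb{w}(n+1)-\myb{w}(n)\|\leq\epsilon$ transfers verbatim, and that this bound is inserted at the correct place in each branch.
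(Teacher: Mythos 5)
Your proposal is correct and follows essentially the same route as the paper's proof: identify $\sum_{k}\frac{m_k}{m}\myb{\delta}_k(n)$ with Bob's increment so the hypothesis gives Inequality (\ref{epsilon}), split Eve's increment via the triangle inequality, pull out $\|\mathbbm{1}\|=\sqrt{d}$, and apply Cauchy--Schwarz --- globally (after factoring out the common $\myb{\gamma}(n)$) in the homogeneous case, and term-by-term with $\max_k\|\myb{\gamma}_k(n)\|$ and the $\alpha(n)$ identity in the heterogeneous case. Your write-up is in fact slightly more explicit than the paper's, which only sketches the heterogeneous branch, and your observation that $\alpha(n)\geq 1$ is the price of heterogeneity matches the paper's interpretation of $\alpha(n)$ as an alignment measure.
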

\begin{proof}
    Given $||\myb{w}(n+1) - \myb{w}(n)||  < \epsilon$,
    \begin{align}
        \label{epsilon}
        \implies \left|\left|\sum_{k=1}^{K}\frac{m_{k}}{m}\myb{\delta}_{k}(n)\right|\right| & < \epsilon. 
    \end{align}
We show the proof for $\myb{\gamma}_{k}(n) = \myb{\gamma}(n) \forall k$. From Equation (\ref{update}),
    \begin{dmath}
        ||\myb{w}(n+1)^{E} - \myb{w}(n)^{E}|| = \left|\left|\sum_{k=1}^{K}\frac{\left(m_{k}[\myb{\delta}_{k}(n)+f_{k,n}(\myb{\delta}_{k}(n))\mathbbm{1}]\right)}{m}\right|\right|,\\ 
         \stackrel{(a)}{\leq} \epsilon + \left|\left|\left(\sum_{k=1}^{K}\frac{m_{k}}{m}\gamma_{k}^{T}(n)\myb{\delta}_{k}(n)\right)\mathbbm{1}\right|\right|, \\
         \stackrel{(b)}{=} \epsilon +  \sqrt{d}\left|\gamma^{T}(n)\sum_{k=1}^{K}\frac{m_{k}}{m}\myb{\delta}_{k}(n)\right|,\\
        \stackrel{(c)}{\leq} \epsilon + \sqrt{d}||\gamma(n)||\left|\left|\sum_{k=1}^{K}\frac{m_{k}}{m}\myb{\delta}_{k}(n)\right|\right|,\\
         \leq \epsilon(1 + \sqrt{d} ||\gamma(n)||),
    \end{dmath}
    where  (a) is due to the triangle inequality and Inequality (\ref{epsilon}) which is a bound on the norm of Bob's update, (b) from the simplifying of the norm of a scalar multiple of a vector and (c) from
    the Cauchy-Schwarz inequality; finally Inequality (\ref{epsilon}) is used.
    If $\gamma_{k}(n) \neq \gamma(n) \hspace{5pt} \forall {k}$, we first apply the triangle inequality in inequality (a), followed by the Cauchy-Schwarz inequality. 
\end{proof}
Proposition (\ref{convprop}) implies that if all agents use the same shift scheme $\myb{\gamma}(n)$ in a given round and if $||\myb{\gamma}(n)||$ is bounded, then Eve converges if Bob converges since the norm of Eve's model updates are upper bounded by a bounded multiple of the norm of Bob's model update. We do not guarantee Eve's convergence if Bob does not converge. If the agents use different shift schemes whose norms are bounded, Eve's convergence depends on the alignment of the model updates, represented by $\alpha(n)$, which can be thought of as a measure of IID data. 


\section{Simulation Results}
\begin{figure}[t]
\begin{center}
\includegraphics[width=0.5\textwidth,height=3in]{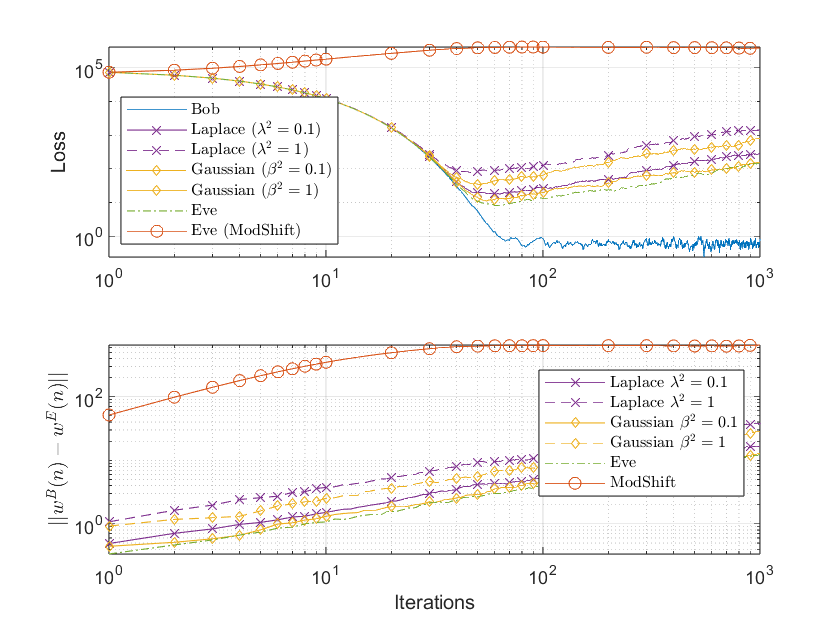}
\caption{Comparison of loss and model shift with ModShift and noise addition scheme}
\label{fig:comparison}
\end{center}
\end{figure}

We consider a linear regression problem on a synthetic dataset in a federated learning setup with $100$ independent agents communicating with Bob. The data vectors $\myb{x}_{k,i}$ are of dimension $d = 60$ and are Gaussian distributed with mean $0$ and identity covariance matrix. The weight vector $\myb{w} = [1,2, \cdots ,d]^{T}$ is fixed and labels for each data vector are generated as 
$y_{k,i} = \myb{w}^{T}\myb{x}_{k,i} + n_{k,i}$
where $n_{k,i}$ is a zero-mean Gaussian random variable with standard deviation $0.1$ for all $k,i$. Each agent has a dataset of size $1,000$. We use the mean square error loss function for which $l(\boldsymbol{w},\boldsymbol{x}_{k,i},y_{k,i})$ in Equation (\ref{loss}) is given by 
\begin{align}
    l(\boldsymbol{w},\boldsymbol{x}_{k,i},y_{k,i}) = (\myb{w}(n)^{T}\myb{x}_{k,i}-y_{k,i})^{2}.
\end{align}
We set $\frac{\sigma_{B}^{2}}{(h_{k}^{B}(n))^{2}} = \frac{\sigma_{E}^{2}}{(h_{k}^{E}(n))^{2}} = 0.1 \hspace{5pt} \forall k,n$ and $\myb{w}^{Bob}(0) = \myb{w}^{Eve}(0) = \myb{0}$ where $\myb{0} \in \mathbb{R}^{d}$ is the zero vector. The learning rate is $\eta=0.005$ and the number of local gradient rounds is $R = 10$. 

We compare ModShift to a noise injection scheme where each agent adds a noise vector to the differences $\myb{\delta}_{k}(n)$. We consider Gaussian noise with covariance matrix $\beta^{2}\myb{I}$ and Laplace noise with scale matrix $\lambda \myb{I}$. The $d$ elements of the noise vector are shared with Bob via the secret channel.

{We look at three shift schemes which satisfy the condition derived in Equation (\ref{sol})}: the Max, the Mean, and the Comp(onent) schemes. We observe that Max ModShift provides the largest shift and thus, not surprisingly, we will see has the maximum effect on Eve.\\ 
\noindent
\textbf{Max scheme:} Let $\alpha = \arg \max_{i} |\myb{\delta}_{k,i}(n)|$. Then 
\begin{align*}
    \myb{\gamma}_{k,i}(n) = \begin{cases}
        -1,& i = \alpha,\\
        0, & i \neq \alpha.
    \end{cases}
\end{align*}\\
\textbf{Mean scheme:} $\myb{\gamma}_{k,i}(n) = -\frac{1}{d}, \hspace{5pt} \forall 1 \leq i \leq d.$\\
\textbf{Comp scheme:} $\myb{\gamma}_{k,i}(n) = \begin{bmatrix}
    -1 & 0 & \cdots & 0
\end{bmatrix}^{T}$.\\
Let $\Delta \myb{w}(n)$ denote the model shift.
Figure \ref{fig:comparison} shows the loss and $||\Delta \myb{w}(n)||$ as a function of the iterations for Bob and Eve under different schemes. For ModShift, we use the max scheme and for the noise injection scheme, we set $\beta^{2}, \lambda^{2} \in \{0.1,1\}$. Bob's performance is the same in both cases since the agents share the noise that is added with Bob. We also look at the model shift when the agents do not add noise or shifts (labelled Eve). ModShift achieves the highest $||\Delta \myb{w}(n)||$ and loss for Eve while the lowest $||\Delta \myb{w}(n)||$ is seen when agents do not shift or add noise to gradients. Increasing the parameters $\beta$ and $\lambda$ leads to an increase in the loss, as expected; however, this increase implies the sharing of more information over the secure channel. While for sufficiently large $\beta$ and $\lambda$, Eve's loss with the noise injection scheme will be worse than Eve's loss with ModShift, the drawback of the noise injection scheme is that each agent must share a $d$ dimension vector over the secret channel with Bob. Furthermore, the noise addition scheme does not pass the convergence test proposed in Section \ref{convergence}. 
\begin{figure}[t]
\begin{center}
\includegraphics[width=0.5\textwidth,height=3in]{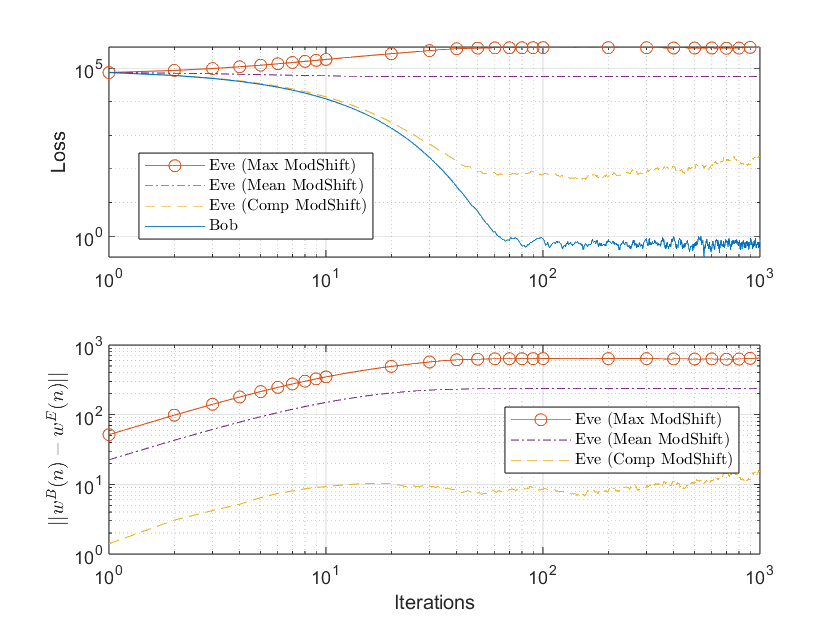}
\caption{Comparison of $3$ different ModShift schemes}
\label{fig:schemecomp}
\end{center}
\end{figure}
Figure \ref{fig:schemecomp} shows the convergence of the three different schemes. All three schemes lead to a model shift and a worse loss for Eve. The max scheme leads to the worst loss for Eve and also the largest $||\Delta \myb{w}(n)||$. The number of iterations after which Eve converges in the three schemes are similar.
\section{Conclusions}
We introduced a shift based approach to preserve model privacy against an eavesdropper in a federated learning environment. We derived the Fisher Information matrix for the shifted updates and theoretically identified a family of shift designs which drives the Fisher Information matrix to singularity. We also proposed a convergence test which Eve may use to identify tampered gradients and showed that our scheme passes this test. We observe that our approach achieves a higher model shift than a noise addition scheme while requiring lesser usage of the secret channel between Bob and each agent.  
\bibliographystyle{IEEEtran}
\bibliography{References/references}

\end{document}